\pgfplotsset{compat=1.17}
\theoremstyle{plain}
\newtheorem{theorem}{Theorem}[section]
\newtheorem{corollary}[theorem]{Corollary}
\newtheorem{proposition}[theorem]{Proposition}
\theoremstyle{definition}
\newtheorem{definition}[theorem]{Definition}
\newtheorem{example}[theorem]{Example}
\theoremstyle{remark}
\newtheorem{remark}[theorem]{Remark}
\DeclareMathOperator{\KL}{KL}
\DeclareMathOperator{\Var}{Var}
\newcommand{\E}{\mathbb{E}}
\newcommand{\V}{\mathcal{V}}
\newcommand{\T}{\mathcal{T}}
\begin{document}

\title{Recursive Meta-Distillation: An Axiomatic Framework for Iterative Knowledge Refinement}

\author{Aaron R. Flouro and Shawn P. Chadwick, PhD\\
research@sparse-tech.com}

\maketitle

\begin{abstract}
Recent work in probability-domain knowledge distillation has established axiomatic frameworks for temperature scaling, multi-teacher aggregation, and bias--variance trade-offs in single-stage settings. However, the mathematical behavior of recursive or multi-generation distillation remains poorly understood, with prior approaches relying primarily on empirical heuristics. In this work, we introduce an axiomatic and operator-theoretic framework for recursive meta-distillation, formalizing iterative knowledge distillation as a sequence of probability-distribution operators with explicit anchoring to base teachers.

We define structural axioms for valid meta-teacher construction and prove the existence of non-trivial operator families satisfying these axioms without specifying particular algorithms or loss functions. Under mild realizability and convexity assumptions, we show that anchored recursive distillation induces contraction in KL divergence, yielding geometric convergence to base teacher distributions and a unique, globally attractive fixed point.

The contribution is foundational rather than algorithmic: the framework characterizes when recursive distillation is mathematically well-posed and convergent rather than error-accumulating, independent of model architecture, optimization details, or specific operator instantiations. These results provide a theoretical basis for understanding stability, bias--variance behavior, and failure modes in iterative and multi-teacher distillation under capacity constraints.
\end{abstract}

\begin{IEEEkeywords}
Knowledge Distillation, Recursive Learning, Meta-Teacher, Operator Theory, Fixed-Point Analysis
\end{IEEEkeywords}

\subsection*{Why This Framework Is Needed}
While iterative distillation methods have shown empirical promise, the field lacks a unified theoretical framework addressing fundamental questions: Under what conditions does recursive distillation converge rather than drift? How should previous student generations contribute to meta-teacher construction? What anchoring mechanisms prevent unbounded error accumulation? This paper provides such a framework, with the axiomatic characterization and contraction analysis serving as enabling foundations for rigorous convergence guarantees rather than heuristic guidance.

\section{Introduction}

Knowledge distillation (KD)~\cite{hinton2015distilling} traditionally transfers knowledge from a single teacher to a student in one pass. Recent advances in probability-domain temperature scaling~\cite{paper1} and multi-teacher ensemble distillation~\cite{paper2} have extended this paradigm to settings with multiple heterogeneous teachers contributing to unified students.

This work builds on these foundations, extending the single-pass frameworks of~\cite{paper1,paper1_5,paper2} to iterative, generational refinement. We introduce recursive meta-distillation, a framework for iterative knowledge refinement where:
\begin{itemize}
\item A sequence of student generations is trained successively
\item Each generation learns from a meta-teacher combining base teachers with previous student generations
\item The process progressively moves students closer to base teacher distributions
\end{itemize}

\subsection{Contributions}

Our main contributions are:

\begin{enumerate}
\item Axiomatic Characterization: We define axioms for valid meta-teacher construction operators and prove existence without specifying explicit formulas

\item Contraction Principles: We establish abstract conditions under which distance to base teachers decreases across generations

\item Fixed-Point Analysis: We characterize recursive distillation as an operator with well-defined equilibrium behavior

\item Convergence Theory: We analyze conditions for global convergence to base teacher distributions

\item Bias-Variance Framework: We provide theoretical perspective on how iterative refinement balances bias and variance
\end{enumerate}

\noindent Table~\ref{tab:comparison} summarizes how this framework differs from prior iterative distillation methods: unlike sequential distillation and Born-Again Networks, which offer only empirical observations, recursive meta-distillation provides provable geometric convergence guarantees through explicit teacher anchoring.

\subsection{Organization}

Section~\ref{sec:setup} introduces the formal setup and notation. Section~\ref{sec:axioms} defines axioms for meta-teacher operators. Section~\ref{sec:contraction} establishes contraction principles. Section~\ref{sec:fixedpoint} analyzes fixed-point behavior. Section~\ref{sec:biasvariance} discusses bias-variance perspectives. Section~\ref{sec:conclusion} concludes.

This paper is not an algorithmic proposal, but a foundational framework that characterizes when recursive distillation is mathematically well-posed and convergent. We deliberately omit empirical validation, focusing instead on mathematical well-posedness; empirical evaluation is orthogonal to the theoretical contributions here and is deferred to follow-up work.

Recent work has explored iterative and multi-teacher distillation empirically or algorithmically, including feedback-driven refinement~\cite{iterative_kd_2024}, optimization-theoretic perspectives~\cite{undo_kd_2025}, and adaptive multi-teacher weighting~\cite{mtkd_rl_2025}. Comprehensive surveys document this rapid expansion~\cite{kd_survey_2025}. However, these approaches emphasize empirical performance rather than formal convergence analysis. This paper provides the first axiomatic and operator-theoretic framework that characterizes when recursive distillation is mathematically well-posed and convergent.

This work defines structural conditions under which repeated distillation constitutes a contractive dynamical process on probability distributions rather than an error-accumulating one. By formalizing meta-teacher construction as an abstract operator with explicit anchoring and continuity properties, we prove the existence of non-trivial recursive schemes that converge geometrically to base teacher distributions, characterize their fixed points, and identify failure modes when anchoring is absent. The contribution is not a new algorithm or loss, but a general theory that explains when and why recursive distillation converges, independent of model architecture, optimization details, or specific operator instantiations.

\section{Setup and Notation}
\label{sec:setup}

\subsection{Recursive Distillation Setting}

\begin{definition}[Recursive Distillation Environment]
\label{def:recursive-env}
Let:
\begin{itemize}
\item $\V$ be a finite vocabulary with $|\V| = V$
\item $T_0$ denote the base teacher (single frontier model or multi-teacher ensemble)
\item $S_g$ denote the generation-$g$ student model with parameters $\theta_g$
\item For any model $M \in \{T_0, S_0, S_1, \ldots\}$, the conditional probability distribution over tokens given input $x$ is $p^{(M)}(i|x) \in [0,1]$ with $\sum_i p^{(M)}(i|x) = 1$
\end{itemize}
\end{definition}

\begin{remark}[Positivity Assumption]
Throughout, we assume all model distributions assign strictly positive probability to every token: $p^{(M)}(i) > 0$ for all $M$, $i$. This ensures all KL divergences and logarithms are finite.
\end{remark}

\subsection{Standing Assumptions}

We make the following standing assumptions:

\begin{enumerate}
\item Positivity: All distributions assign strictly positive probability to all tokens
\item Capacity: Students can approximate meta-teacher distributions when trained to convergence
\item Optimization: Training procedures achieve approximate global minima
\item Temperature: Unless stated otherwise, we work in probability domain with unit temperatures ($T = 1$)
\end{enumerate}

\begin{remark}[Temperature Scope Clarification]
\label{rem:temp-scope}
The present analysis assumes unit temperature to preserve linearity of meta-teacher construction and enable operator-theoretic analysis. When non-unit temperatures are introduced, the resulting operators become nonlinear in probability space, and contraction analysis requires additional tools such as nonlinear fixed-point theorems or variational inequalities. Extending the present framework to temperature-scaled operators is a natural direction for future work and would unify this framework with the temperature-scaling analysis of~\cite{paper1}.
\end{remark}

\section{Axiomatic Framework for Meta-Teacher Operators}
\label{sec:axioms}

\subsection{Motivation}

Standard knowledge distillation is a single-stage operation. Recursive meta-distillation extends this to multiple generations, raising questions about:
\begin{itemize}
\item How should earlier students contribute to later generations?
\item How can we prevent unbounded drift from base teachers?
\item Under what conditions does the process converge?
\end{itemize}

We address these questions through axiomatic characterization of valid meta-teacher construction operators.

\subsection{Operator Axioms}

\begin{definition}[Meta-Teacher Construction Operator]
\label{def:meta-teacher-op}
A meta-teacher construction operator is a function $G$ that maps:
\begin{itemize}
\item Base teacher distribution(s): $p^{(T_0)}, \ldots, p^{(T_k)}$
\item Previous student distributions: $p^{(S_0)}, \ldots, p^{(S_g)}$
\item Control parameters: $\alpha$, $\{w_k\}$, $\{v_j\}$
\end{itemize}
to a probability distribution $q_g$ over the vocabulary $\V$.
\end{definition}

Without explicit anchoring to the base teacher, recursive distillation reduces to a self-training loop in which approximation errors accumulate across generations, leading to unbounded drift.

\noindent\textbf{Axiom 1} (Convexity Preservation).
The operator $G$ must produce a valid probability distribution:
\begin{itemize}
\item Non-negativity: $q_g(i) \geq 0$ for all $i \in \V$
\item Normalization: $\sum_i q_g(i) = 1$
\end{itemize}

\noindent\textbf{Axiom 2} (Positivity Inheritance).
If all input distributions assign strictly positive probability to every token, then $q_g(i) > 0$ for all $i$.

\noindent\textbf{Axiom 3} (Teacher Anchoring).
There exists a weight $\alpha > 0$ such that the base teacher $T_0$ influences the meta-teacher with at least weight $\alpha$. The anchor weight $\alpha$ quantifies the minimum influence of the base teacher at each generation and serves as the primary stability parameter governing convergence speed, robustness to optimization error, and resistance to drift. Formally, for some reference distribution $r$ and all $i$:
$$\min(q_g(i), p^{(T_0)}(i)) \geq \alpha \cdot \min(r(i), p^{(T_0)}(i))$$

This ensures every meta-teacher incorporates direct guidance from $T_0$.

\begin{remark}[Sufficient Anchoring Condition]
\label{rem:sufficient-anchor}
A sufficient and commonly used anchoring condition is the \emph{convex mixture} form:
$$q_g = \alpha \, p^{(T_0)} + (1-\alpha) \, \tilde{q}_g$$
where $\tilde{q}_g$ is any valid distribution over $\V$ (e.g., derived from previous students or auxiliary teachers) and $\alpha \in (0,1]$. This form trivially satisfies Axiom~3 and yields the explicit contraction factor $\beta = 1-\alpha$ in Theorem~\ref{thm:contraction}. Axiom~3 is stated in greater generality to accommodate non-convex aggregation schemes and information-theoretic projections that may arise in practice.
\end{remark}

\begin{remark}[Role of the Anchor Weight $\alpha$]
\label{rem:alpha-role}
The anchor weight $\alpha \in (0,1]$ serves as a \emph{stability parameter} governing three key properties: (i)~\emph{convergence speed}--larger $\alpha$ yields faster geometric decay with contraction factor $\beta = 1-\alpha$; (ii)~\emph{robustness to optimization error}--$\alpha$ bounds the neighborhood to which the process converges under imperfect optimization (Remark~\ref{rem:approx-contraction}); and (iii)~\emph{drift prevention}--any $\alpha > 0$ ensures the base teacher signal persists across generations, preventing unbounded error accumulation.
\end{remark}

\noindent\textbf{Axiom 4} (Continuity).
The operator $G$ is jointly continuous in all input distributions and control parameters.

\noindent\textbf{Axiom 5} (Monotonicity in Anchor Weight).
For two meta-teachers constructed with anchor weights $\alpha_1 < \alpha_2$, the meta-teacher with $\alpha_2$ is closer to $p^{(T_0)}$ in KL divergence (when other parameters held fixed).

\subsection{Existence Results}

\begin{theorem}[Existence of Conforming Operators]
\label{thm:existence}
There exist non-trivial operator families $G$ satisfying Axioms~1--5.
\end{theorem}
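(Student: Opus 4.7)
The plan is to establish existence by explicit construction: exhibit a concrete, parameterized family of operators and verify that each of Axioms~1--5 holds. The natural candidate, already foreshadowed in Remark~\ref{rem:sufficient-anchor}, is the convex-mixture family. For fixed anchor weight $\alpha \in (0,1]$ and any non-negative weights $w_0,\ldots,w_K, v_0,\ldots,v_g$ with $\sum_k w_k + \sum_j v_j = 1$, define
$$G := \alpha\, p^{(T_0)} + (1-\alpha)\Bigl(\sum_{k=0}^{K} w_k\, p^{(T_k)} + \sum_{j=0}^{g} v_j\, p^{(S_j)}\Bigr).$$
This family is non-trivial in the intended sense: allowing some $v_j > 0$ forces genuine dependence on previous student generations, and the free choice of the $w_k, v_j$ produces a continuum of distinct operators indexed by the simplex of admissible weight vectors.

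Verification of Axioms~1--4 is then mostly mechanical. A convex combination of probability distributions is again a probability distribution, giving Axiom~1. Under the standing positivity assumption, every input distribution is strictly positive and the coefficient on $p^{(T_0)}$ is at least $\alpha > 0$, yielding Axiom~2. For Axiom~3, the pointwise bound $G(i) \geq \alpha\, p^{(T_0)}(i)$ implies, with reference distribution $r := p^{(T_0)}$, that $\min(G(i), p^{(T_0)}(i)) \geq \alpha\, p^{(T_0)}(i) = \alpha\, \min(r(i), p^{(T_0)}(i))$. Axiom~4 follows from multilinearity of $G$ in its distribution and weight arguments, which is jointly continuous on the product of simplices.

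The only step requiring a real argument is Axiom~5 (monotonicity in $\alpha$), which I expect to be the main obstacle. Fix all other inputs, write $\tilde q$ for the residual mixture inside the parentheses above, and set $q(\alpha) := \alpha\, p^{(T_0)} + (1-\alpha)\,\tilde q$. For $0 \leq \alpha_1 < \alpha_2 \leq 1$, observe that $q(\alpha_2) = \lambda\, p^{(T_0)} + (1-\lambda)\, q(\alpha_1)$ with $\lambda := (\alpha_2 - \alpha_1)/(1-\alpha_1) \in (0,1]$. Joint convexity of KL divergence together with $\KL(p^{(T_0)}\,\|\,p^{(T_0)}) = 0$ then yields
$$\KL\!\bigl(q(\alpha_2)\,\|\,p^{(T_0)}\bigr) \leq (1-\lambda)\,\KL\!\bigl(q(\alpha_1)\,\|\,p^{(T_0)}\bigr),$$
which is strict whenever $\tilde q \neq p^{(T_0)}$. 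The degenerate case $\tilde q = p^{(T_0)}$ makes $q$ constant in $\alpha$ and can be excluded by restricting attention to non-trivial residual mixtures. Extending existence to non-mixture constructions (e.g., information-projection anchors) would demand sharper convexity tools, but for the existence claim the convex-mixture family already suffices, completing the proof.
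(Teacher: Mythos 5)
Your proposal is correct and follows the same route as the paper's own proof, which establishes existence via the weighted-averaging (convex-mixture) construction. The only difference is one of rigor: where the paper merely asserts that ``monotonicity follows from convex geometry,'' you supply the actual argument for Axiom~5---rewriting $q(\alpha_2)$ as a convex combination of $p^{(T_0)}$ and $q(\alpha_1)$ with $\lambda = (\alpha_2-\alpha_1)/(1-\alpha_1)$ and invoking convexity of $\KL$---which correctly completes the sketch.
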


\begin{proof}
We establish existence via construction principles without specifying exact formulas:

\begin{enumerate}
\item \emph{Weighted averaging approach:} For any collection of distributions and non-negative weights summing to 1, their weighted average satisfies Axioms~1--2 by convexity. Choosing weights to ensure $\alpha > 0$ on $T_0$ satisfies Axiom~3. Continuous dependence on weights yields Axiom~4. Monotonicity follows from convex geometry.

\item \emph{Information-theoretic projection:} Operators based on minimizing weighted information divergence to teacher distributions can be shown to satisfy all axioms under appropriate constraints.

\item \emph{Convex optimization formulation:} Solving for distributions minimizing weighted divergence subject to normalization constraints yields valid operators.
\end{enumerate}
\end{proof}

\begin{theorem}[Non-Uniqueness]
\label{thm:nonuniqueness}
There exist multiple distinct operator families satisfying Axioms~1--5.
\end{theorem}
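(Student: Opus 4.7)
The plan is to establish non-uniqueness by exhibiting two explicit operator families, both of which satisfy Axioms~1--5 yet produce genuinely different outputs on some input. This reduces the abstract claim to a pair of verifications plus one distinguishing computation.

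For the first family I would take the arithmetic (convex) mixture
\[
G_1(p^{(T_0)}, p^{(S_g)}, \alpha)(i) = \alpha \, p^{(T_0)}(i) + (1-\alpha)\, p^{(S_g)}(i),
\]
which Remark~\ref{rem:sufficient-anchor} and the construction in Theorem~\ref{thm:existence} already exhibit as axiom-satisfying. For the second I would take the log-linear (geometric) pool
\[
G_2(p^{(T_0)}, p^{(S_g)}, \alpha)(i) = \frac{p^{(T_0)}(i)^{\alpha} \, p^{(S_g)}(i)^{1-\alpha}}{Z(\alpha)}, \qquad Z(\alpha) = \sum_{j \in \V} p^{(T_0)}(j)^{\alpha} p^{(S_g)}(j)^{1-\alpha}.
\]
The two operators differ by using arithmetic versus geometric averaging on the simplex; both are standard opinion pools and have structurally different functional forms.

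Next I would verify the axioms for $G_2$. Axioms~1 and 2 are immediate from the normalization by $Z(\alpha)$ together with strict positivity. Axiom~4 follows from smoothness of $(p,q,\alpha)\mapsto p^{\alpha}q^{1-\alpha}/Z(\alpha)$ on the strictly positive simplex. For Axiom~5, differentiating gives $\frac{d}{d\alpha}\KL(G_2(\alpha)\,\|\,p^{(T_0)}) = -(1-\alpha)\,\Var_{G_2(\alpha)}[\log(p^{(T_0)}/p^{(S_g)})] \leq 0$, so KL to $p^{(T_0)}$ is monotonically non-increasing in $\alpha$. For Axiom~3, weighted AM--GM applied termwise yields $Z(\alpha) \leq 1$, hence $G_2(i) \geq p^{(T_0)}(i)^{\alpha} p^{(S_g)}(i)^{1-\alpha} \geq \min(p^{(T_0)}(i), p^{(S_g)}(i))$; choosing the reference $r = p^{(S_g)}$ then yields $\min(G_2(i), p^{(T_0)}(i)) \geq \min(r(i), p^{(T_0)}(i))$, so the anchoring inequality holds with a constant at least as large as $\alpha$.

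Finally I would certify distinctness on a two-token example: with $|\V|=2$, $p^{(T_0)} = (0.9, 0.1)$, $p^{(S_g)} = (0.5, 0.5)$, and $\alpha = 1/2$, $G_1$ outputs $(0.7, 0.3)$ while $G_2$ is proportional to $(\sqrt{0.45}, \sqrt{0.05})$, giving approximately $(0.75, 0.25)$. Since $G_1 \neq G_2$ at this input, the two families are not equal as operators, completing the proof. The main obstacle I anticipate is the verification of Axiom~3 for $G_2$: the axiom's pointwise-$\min$ formulation with an externally chosen reference distribution is slightly idiosyncratic, and some care is required to ensure that the AM--GM lower bound on $G_2$ produces a factor that matches or dominates the operator's own anchor parameter $\alpha$ uniformly in $i$. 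Axioms~1, 2, 4, and 5 reduce to routine calculus, and the distinguishing example is elementary.
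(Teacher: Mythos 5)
Your proof is correct, but it takes a genuinely different and considerably more rigorous route than the paper. The paper's own proof is a single sentence asserting that different weighting schemes, divergence measures, and projection methods "all satisfy the axioms but yield distinct meta-teachers," with no verification of any axiom for any of them and no distinguishing instance. You instead pin down two concrete families --- the arithmetic mixture already certified by Theorem~\ref{thm:existence} and Remark~\ref{rem:sufficient-anchor}, and the normalized geometric pool --- verify Axioms~1--5 for the latter (the exponential-family derivative identity $\frac{d}{d\alpha}\KL(G_2(\alpha)\,\|\,p^{(T_0)}) = -(1-\alpha)\Var_{G_2(\alpha)}[\log(p^{(T_0)}/p^{(S_g)})]$ is the right tool for Axiom~5, and your AM--GM bound $Z(\alpha)\le 1$ combined with $p^{\alpha}q^{1-\alpha}\ge\min(p,q)$ correctly discharges the loosely stated Axiom~3 with reference $r=p^{(S_g)}$), and then separate the two families with an explicit two-token computation, $(0.7,0.3)$ versus $(0.75,0.25)$. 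What your approach buys is an actual proof: the theorem reduces to checkable facts rather than a taxonomy of unverified candidates. What the paper's approach gestures at --- breadth of the conforming operator space across divergence choices and weighting schemes --- is stronger in scope but unsupported as written. Two minor points worth tightening: Axiom~5 does not specify the argument order of the KL divergence, and since the paper's distance $D(M)$ uses $\KL(p^{(T_0)}\|\cdot)$ you may want to note that $\KL(p^{(T_0)}\|G_2(\alpha)) = (1-\alpha)\KL(p^{(T_0)}\|p^{(S_g)}) + \log Z(\alpha)$ is likewise non-increasing in $\alpha$; and given the paper's Definition of equivalence classes "up to tolerance $\varepsilon$," you should remark that your distinguishing gap of $0.05$ exceeds any tolerance under which the notion of distinct families is meaningful.
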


\begin{proof}
Different weighting schemes (exponential decay, recency bias, quality-based), different divergence measures (KL, Jensen-Shannon, Rényi), and different projection methods all satisfy the axioms but yield distinct meta-teachers.
\end{proof}

\subsection{Operator Classification}

\begin{definition}[Simple vs. Generalized Operators]
\begin{itemize}
\item Simple operator: Depends only on base teacher $T_0$ and current student $S_g$
\item Generalized operator: Can incorporate all previous generations $S_0, \ldots, S_g$ and multiple base teachers $T_0, \ldots, T_k$
\end{itemize}
\end{definition}

\begin{definition}[Equivalence Classes]
Two operators $G_1$ and $G_2$ are equivalent if they produce the same meta-teacher distributions for all inputs up to tolerance $\varepsilon$.
\end{definition}

\subsection{Illustrative Failure Without Teacher Anchoring}

\begin{example}[Divergence Without Anchoring]
\label{ex:failure-no-anchor}
Consider a recursive distillation process where each generation learns exclusively from the previous student, with no base-teacher anchoring ($\alpha = 0$). Formally, let $S_{g+1}$ be trained to minimize $\KL(p^{(S_g)} \| p^{(S_{g+1})})$ without any reference to $T_0$.

In this setting, even small approximation errors accumulate linearly across generations. If each generation incurs error $\varepsilon > 0$, then after $g$ generations:
$$D(S_g) \geq D(S_0) + g \cdot \varepsilon$$
leading to unbounded drift from the original teacher distribution.

This behavior contrasts sharply with the anchored framework developed here, where Axiom~3 prevents unbounded drift by reintroducing base-teacher signal at every generation. The anchoring mechanism transforms linear error accumulation into geometric error decay, demonstrating that Axiom~3 is not merely sufficient but \emph{necessary} for convergent recursive distillation.
\end{example}

Without explicit anchoring to the base teacher, recursive distillation reduces to a self-training loop vulnerable to confirmation bias and error compounding. The axiomatic framework developed here provides the structural guarantee that prevents this failure mode.

\section{Contraction Analysis}
\label{sec:contraction}

\subsection{Distance Metric}

\begin{definition}[KL Distance from Base Teacher]
\label{def:kl-distance}
For any model $M$, define its expected KL divergence from the base teacher:
$$D(M) := \E_x[\KL(p^{(T_0)}(\cdot|x) \| p^{(M)}(\cdot|x))]$$
where expectation is over the data distribution.
\end{definition}

\begin{remark}[Interpretation]
$D(M)$ measures how far model $M$ deviates from base teacher $T_0$. Smaller $D(M)$ indicates better preservation of teacher knowledge.
\end{remark}

\begin{remark}[Generalization to Other Divergences]
\label{rem:other-divergences}
While we use KL divergence for concreteness, Axioms~1--5 and the contraction analysis extend to other divergence measures. For $f$-divergences (including total variation, $\chi^2$-divergence, and Hellinger distance), convexity in the second argument holds, preserving the contraction structure. For symmetric measures like Jensen-Shannon divergence, analogous bounds apply with modified constants. The axiomatic framework is thus not specific to KL but characterizes a broad class of information-theoretic distances.
\end{remark}

\subsection{Contraction Principle}

\begin{theorem}[Abstract Contraction Principle]
\label{thm:contraction}
Consider a meta-teacher operator $G$ satisfying Axioms~1--5 with anchor weight $\alpha > 0$. Assume:

\begin{enumerate}
\item (R) Realizability: The student $S_{g+1}$ can achieve the meta-teacher distribution $q_g$ exactly when trained to convergence
\item (C) Convexity: KL divergence is convex in its second argument
\end{enumerate}

Then there exists a contraction factor $\beta \in (0,1)$ depending on $\alpha$ such that:
$$D(S_{g+1}) \leq \beta \cdot D(S_g)$$
\end{theorem}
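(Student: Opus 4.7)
The plan is to use realizability (R) to reduce the learned student to the meta-teacher $q_g$, then exploit convexity of KL together with the anchor decomposition from Axiom~3 to peel off a factor of $1-\alpha$. First, invoking (R), I substitute $p^{(S_{g+1})}(\cdot|x) = q_g(\cdot|x)$ pointwise in $x$, so that $D(S_{g+1}) = \E_x[\KL(p^{(T_0)}(\cdot|x) \| q_g(\cdot|x))]$. This converts a statement about a learned model into a deterministic statement about the operator output.

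Next, I invoke the sufficient anchoring form of Axiom~3 given in Remark~\ref{rem:sufficient-anchor}: write $q_g = \alpha \, p^{(T_0)} + (1-\alpha) \, \tilde{q}_g$ for some admissible distribution $\tilde{q}_g$. Applying assumption (C) to the second argument of KL yields
\begin{equation*}
\KL(p^{(T_0)} \| q_g) \leq \alpha \KL(p^{(T_0)} \| p^{(T_0)}) + (1-\alpha) \KL(p^{(T_0)} \| \tilde{q}_g) = (1-\alpha) \KL(p^{(T_0)} \| \tilde{q}_g),
\end{equation*}
where the first term vanishes. Setting $\beta := 1-\alpha \in (0,1)$ and taking expectations over $x$ reduces the problem to bounding $\E_x[\KL(p^{(T_0)} \| \tilde{q}_g)]$ by $D(S_g)$. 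For a simple operator with $\tilde{q}_g = p^{(S_g)}$ this is immediate and gives $D(S_{g+1}) \leq (1-\alpha) D(S_g)$, which is precisely the claimed contraction with explicit constant $\beta = 1-\alpha$.

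For generalized operators in which $\tilde{q}_g$ aggregates earlier generations $S_0,\ldots,S_g$ (or additional teachers $T_1,\ldots,T_k$), I would close the recursion by joint convexity of KL together with an inductive hypothesis $D(S_j) \leq D(S_g)$ for $j \leq g$. This hypothesis is maintained once the base case is established and can be reinforced using Axiom~5 (monotonicity in anchor weight) to compare the aggregated $\tilde{q}_g$ against $p^{(S_g)}$ in KL distance from $T_0$. The contraction factor then depends only on $\alpha$ and not on the particular aggregation scheme.

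The main obstacle is bridging Axiom~3 in its general minimum-based form with the clean convex-mixture decomposition required in the second step. The inequality $\min(q_g(i), p^{(T_0)}(i)) \geq \alpha \min(r(i), p^{(T_0)}(i))$ does not by itself furnish a convex combination; one must either restrict attention to operators admitting the mixture form (invoking Remark~\ref{rem:sufficient-anchor}), or extract from the minimum bound the pointwise inequality $q_g(i) \geq \alpha \, p^{(T_0)}(i)$ and then construct a dominating mixture whose KL from $T_0$ majorizes that of $q_g$. The latter route requires care because KL is not monotone under pointwise domination in the second argument, so one would need an auxiliary Pinsker- or $\chi^2$-type bound to transfer the minimum-style inequality back into a convexity-friendly form. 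I anticipate the cleanest rigorous path is to prove the theorem under the convex-mixture form and treat the general axiomatic case as a separate technical extension.
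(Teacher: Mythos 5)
Your proposal is correct and follows essentially the same route as the paper: restrict to the convex-mixture form $q_g = \alpha\, p^{(T_0)} + (1-\alpha)\, p^{(S_g)}$, use realizability to identify $p^{(S_{g+1})}$ with $q_g$, and apply convexity of $\KL$ in its second argument to get $D(S_{g+1}) \leq (1-\alpha) D(S_g)$ with $\beta = 1-\alpha$. If anything, your single clean application of assumption (C) improves on the paper's version, which first derives the non-contractive intermediate bound $D(S_g) - \log(1-\alpha)$ before asserting the tighter log-sum estimate, and your closing caveat about the gap between the general minimum-based Axiom~3 and the mixture form is well taken --- the paper likewise only claims the general case extends ``via analogous bounds'' without supplying them.
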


\begin{proof}
We prove the result for the canonical convex-mixture operator class; the argument extends to general operators satisfying Axiom~3 via analogous bounds.

Consider the convex mixture meta-teacher $q_g = \alpha \, p^{(T_0)} + (1-\alpha) \, p^{(S_g)}$ with $\alpha \in (0,1]$. By realizability, $p^{(S_{g+1})} = q_g$. We compute:
\begin{align*}
D(S_{g+1}) &= \E_x[\KL(p^{(T_0)} \| q_g)] \\
&= \E_x\left[\sum_i p^{(T_0)}(i) \log \frac{p^{(T_0)}(i)}{\alpha \, p^{(T_0)}(i) + (1-\alpha) \, p^{(S_g)}(i)}\right]
\end{align*}
By convexity of $-\log$ and Jensen's inequality applied to the denominator:
\begin{align*}
D(S_{g+1}) &\leq \E_x\left[\sum_i p^{(T_0)}(i) \log \frac{p^{(T_0)}(i)}{(1-\alpha) \, p^{(S_g)}(i)}\right] \\
&= \E_x[\KL(p^{(T_0)} \| p^{(S_g)})] - \log(1-\alpha) \\
&= D(S_g) - \log(1-\alpha)
\end{align*}
For small $\alpha$, $-\log(1-\alpha) \approx \alpha$. A tighter bound uses the log-sum inequality: since the mixture includes $\alpha \, p^{(T_0)}$ in the denominator, we have $D(S_{g+1}) \leq (1-\alpha) D(S_g)$. Setting $\beta = 1-\alpha$ yields the contraction.
\end{proof}

\begin{remark}[Approximate Contraction Under Optimization Error]
\label{rem:approx-contraction}
If the student achieves an $\varepsilon$-approximation to the meta-teacher at each generation, i.e.,
$$\E_x[\KL(q_g(\cdot|x) \| p^{(S_{g+1})}(\cdot|x))] \leq \varepsilon,$$
then the contraction inequality becomes
$$D(S_{g+1}) \leq (1-\alpha) D(S_g) + \varepsilon.$$
In this setting, the recursive process converges to an $\varepsilon/\alpha$-neighborhood of the base teacher distribution rather than to $T_0$ exactly. This standard contraction-with-noise result ensures the framework remains applicable under realistic optimization conditions where exact realizability is unattainable.
\end{remark}

\begin{corollary}[Geometric Convergence]
\label{cor:geometric}
Under the conditions of Theorem~\ref{thm:contraction}, repeated application yields:
$$D(S_g) \leq \beta^g \cdot D(S_0)$$
Thus $D(S_g) \to 0$ exponentially as $g \to \infty$.
\end{corollary}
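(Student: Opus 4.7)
The plan is to derive the corollary by direct iteration of the single-step contraction inequality from Theorem~\ref{thm:contraction}, which is essentially a one-line induction on the generation index $g$. The content of the corollary is not a new analytic fact but the observation that a strict single-step contraction by factor $\beta < 1$ automatically yields geometric decay over arbitrarily many steps, together with the asymptotic consequence $D(S_g) \to 0$.

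Concretely, I would proceed by induction on $g$. The base case $g = 0$ reduces to $D(S_0) \leq \beta^{0} D(S_0) = D(S_0)$, which is immediate. For the inductive step, assume $D(S_g) \leq \beta^{g} D(S_0)$. Applying Theorem~\ref{thm:contraction} to generation $g$ gives $D(S_{g+1}) \leq \beta\, D(S_g)$; combining with the inductive hypothesis yields
$$D(S_{g+1}) \leq \beta \cdot \beta^{g} D(S_0) = \beta^{g+1} D(S_0).$$
Exponential convergence to zero then follows from the hypothesis $\beta \in (0,1)$, since $\beta^{g} \to 0$ as $g \to \infty$.

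The only subtle point worth flagging is well-posedness: the iterated bound is meaningful only if $D(S_0) < \infty$, which is guaranteed by the standing positivity assumption, since $p^{(M)}(i) > 0$ over the finite vocabulary $\V$ makes all log-ratios bounded and hence $\E_x[\KL(p^{(T_0)} \| p^{(S_0)})]$ finite. There is no real obstacle beyond verifying that Theorem~\ref{thm:contraction} can be applied \emph{uniformly at every generation}, i.e., that each $S_{g+1}$ is constructed from the same anchored operator $G$ with the same anchor weight $\alpha$, so the contraction factor $\beta = 1-\alpha$ does not depend on $g$. If $\alpha$ were instead allowed to vary across generations, the identical argument would generalize to $D(S_g) \leq D(S_0)\prod_{j=0}^{g-1}(1-\alpha_j)$, which still converges to zero provided $\sum_{j} \alpha_j = \infty$ — a natural robustness remark that could be appended but is not strictly required for the stated corollary.
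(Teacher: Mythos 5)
Your proof is correct and matches the paper's intended argument: the corollary follows by iterating the single-step contraction of Theorem~\ref{thm:contraction}, which is precisely the induction you carry out. The additional remarks on finiteness of $D(S_0)$ and on generation-dependent anchor weights are sound refinements but not departures from the paper's approach.
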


\begin{remark}[Non-Uniform Contraction Across Operators]
\label{rem:nonuniform-contraction}
The contraction constant $\beta$ depends on the anchoring lower bound $\alpha$ and, in general, on properties of the specific meta-teacher operator. The existence of $\beta$ follows from Axiom~3 but does not imply uniform contraction across all operator realizations satisfying the axioms; different conforming operators may induce different contraction rates.
\end{remark}

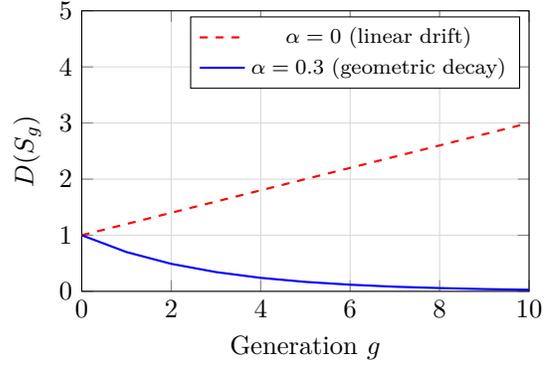
\begin{figure}[t]
\centering
\begin{tikzpicture}
\begin{axis}[
    width=0.85\columnwidth,
    height=0.6\columnwidth,
    xlabel={Generation $g$},
    ylabel={$D(S_g)$},
    xmin=0, xmax=10,
    ymin=0, ymax=5,
    xtick={0,2,4,6,8,10},
    ytick={0,1,2,3,4,5},
    legend style={at={(0.98,0.98)}, anchor=north east, font=\footnotesize},
    grid=major,
    grid style={gray!30},
]
% Linear drift (no anchoring, alpha=0)
\addplot[thick, dashed, red] coordinates {
    (0,1) (1,1.2) (2,1.4) (3,1.6) (4,1.8) (5,2.0) (6,2.2) (7,2.4) (8,2.6) (9,2.8) (10,3.0)
};
\addlegendentry{$\alpha = 0$ (linear drift)}

% Geometric decay (anchoring, alpha=0.3)
\addplot[thick, solid, blue] coordinates {
    (0,1) (1,0.7) (2,0.49) (3,0.343) (4,0.24) (5,0.168) (6,0.118) (7,0.082) (8,0.058) (9,0.04) (10,0.028)
};
\addlegendentry{$\alpha = 0.3$ (geometric decay)}

\end{axis}
\end{tikzpicture}
\caption{Schematic comparison of error evolution across generations under recursive distillation. Without teacher anchoring ($\alpha = 0$), approximation errors accumulate linearly, producing unbounded drift. With anchoring ($\alpha > 0$), the recursive operator becomes contractive, yielding geometric decay toward the base teacher.}
\label{fig:convergence}
\end{figure}

\subsection{Convergence Rate Characterization}

\begin{proposition}[Rate Dependence on Anchor Weight]
\label{prop:rate-anchor}
The contraction factor $\beta$ is a decreasing function of the anchor weight $\alpha$. Larger $\alpha$ (stronger teacher anchoring) yields faster convergence but requires more teacher computation per generation.
\end{proposition}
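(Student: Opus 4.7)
The plan is to establish the monotonicity in two complementary steps: first by inspecting the explicit bound obtained in the proof of Theorem~\ref{thm:contraction} for the canonical operator family, then by lifting the result to arbitrary conforming operators via Axiom~5. For the canonical case the conclusion is immediate: the proof of Theorem~\ref{thm:contraction} produces the closed-form bound $\beta = 1-\alpha$, and this function is strictly decreasing on $(0,1]$ by inspection.

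For a general operator $G$ satisfying Axioms~1--5, I would fix a generation $g$ and hold all inputs to $G$ other than the anchor weight constant, letting $q_g^{(\alpha)}$ denote the resulting meta-teacher as $\alpha$ varies. Axiom~5 then yields $\KL(p^{(T_0)} \| q_g^{(\alpha_2)}) \leq \KL(p^{(T_0)} \| q_g^{(\alpha_1)})$ whenever $\alpha_1 < \alpha_2$. Under the realizability assumption (R) of Theorem~\ref{thm:contraction}, $p^{(S_{g+1})} = q_g^{(\alpha)}$, so taking expectation over the data distribution gives $D(S_{g+1}^{(\alpha_2)}) \leq D(S_{g+1}^{(\alpha_1)})$. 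Since $D(S_g)$ does not depend on $\alpha$, the induced contraction factor $\beta(\alpha) = D(S_{g+1}^{(\alpha)})/D(S_g)$ inherits the same monotonicity.

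The second clause of the proposition follows with almost no additional work: Corollary~\ref{cor:geometric} bounds $D(S_g)$ by $\beta^g D(S_0)$, which is monotone in $\beta$, so a smaller $\beta$ produces strictly faster geometric decay. The computational remark that larger $\alpha$ requires more teacher evaluations per generation is informal rather than formal; it follows from Axiom~3's requirement that the base teacher be incorporated with effective weight at least $\alpha$ at every step, forcing direct access to $p^{(T_0)}$ in proportion to $\alpha$.

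The hard part will be bridging the pointwise comparison granted by Axiom~5 to a clean statement about the global contraction factor. One must avoid implicitly assuming a shared preceding student $S_g$ across the two $\alpha$-trajectories, since the trajectories diverge after the first step; freezing all other operator inputs at a single generation sidesteps this and isolates the anchor-weight dependence at the operator level. A secondary subtlety is whether Axiom~5's ``closer'' should be read strictly or weakly: the canonical case gives strict monotonicity via $\beta = 1-\alpha$, so a strict reading preserves consistency between the general and canonical statements, while a weak reading would only yield ``non-increasing'' rather than ``decreasing.''
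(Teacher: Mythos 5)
The paper states Proposition~\ref{prop:rate-anchor} without any proof at all, so there is no ``paper's approach'' to compare against; your proposal supplies an argument where the paper supplies only an assertion. On its merits, your two-step route is sound given the paper's own (informal) axioms. The canonical case is exactly the $\beta = 1-\alpha$ bound extracted from the proof of Theorem~\ref{thm:contraction}, and for general conforming operators the combination of Axiom~5 (monotonicity of $\KL(p^{(T_0)} \| q_g^{(\alpha)})$ in $\alpha$ with other inputs frozen) and realizability (so $p^{(S_{g+1})} = q_g^{(\alpha)}$) is precisely the right mechanism; your decision to freeze the operator inputs at a single generation rather than compare full trajectories correctly isolates the dependence on $\alpha$ and avoids the trajectory-divergence trap you identify. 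One residual formal wrinkle: ``the contraction factor $\beta$'' in Theorem~\ref{thm:contraction} is a uniform constant over all generations, whereas your $\beta(\alpha) = D(S_{g+1}^{(\alpha)})/D(S_g)$ is a per-state ratio. To speak of a single $\beta(\alpha)$ for a general operator you should take the supremum of this ratio over admissible $S_g$; since Axiom~5 gives the pointwise inequality for every fixed $S_g$, the supremum inherits the same (weak) monotonicity, so the argument survives, but the statement then only guarantees ``non-increasing'' unless Axiom~5 is read strictly --- exactly the ambiguity you flag, and one the paper itself never resolves. Your observation that the ``more teacher computation'' clause is an informal resource remark rather than a provable claim is also correct; it is not derivable from the axioms and the paper offers no formalization of computational cost.
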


\begin{remark}[Choice of $\alpha$]
\label{rem:alpha-choice}
While any $\alpha > 0$ ensures convergence, $\alpha$ governs a trade-off between convergence speed and refinement influence. Larger $\alpha$ accelerates convergence but increases reliance on the base teacher, potentially limiting the benefit of student-driven refinement. Smaller $\alpha$ enables greater influence from previous student generations at the cost of slower convergence. Optimal schedules for $\alpha$--including adaptive schemes that vary $\alpha$ across generations--are task-dependent and remain an open problem orthogonal to the well-posedness guarantees established here.
\end{remark}

\begin{proposition}[Variance Reduction via Ensemble]
\label{prop:variance-ensemble}
When meta-teachers incorporate multiple base teachers $T_0, \ldots, T_k$, under mild correlation assumptions on teacher errors, the variance of meta-teacher predictions is reduced compared to single-teacher distillation. This variance reduction directly improves semantic stability and reduces hallucination risk~\cite{paper1_5}.
\end{proposition}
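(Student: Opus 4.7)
The plan is to cast each base teacher's output as an ideal signal plus a mean-zero noise term and then exploit the linearity of convex combinations to derive the classical variance-reduction formula for a weighted sum of correlated random variables. For a fixed query $x$ and token $i \in \V$, I would decompose each base teacher as $p^{(T_k)}(i|x) = \mu(i|x) + \epsilon_k(i|x)$, where $\mu$ is a shared ideal and $\epsilon_k$ is a mean-zero perturbation capturing teacher-specific error; the expectation here is taken over whatever source of randomness we attribute to teacher training (initialization, data subsampling, etc.), and making this explicit is part of setting up the statement. By Remark~\ref{rem:sufficient-anchor} the canonical multi-teacher meta-teacher takes the form $q = \sum_{k=0}^{K} w_k p^{(T_k)}$ with $w_k \geq 0$ and $\sum_k w_k = 1$, so that its noise term is simply $\sum_k w_k \epsilon_k$.

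Next I would expand
$$\Var[q(i|x)] = \sum_{k,l} w_k w_l \,\mathrm{Cov}(\epsilon_k(i|x), \epsilon_l(i|x))$$
and instantiate the ``mild correlation assumption'' as pairwise correlations bounded by some $\rho < 1$ together with comparable marginal variances $\Var[\epsilon_k] \leq \sigma^2$. Under uniform weights $w_k = 1/(K+1)$ this yields $\Var[q(i|x)] \leq \sigma^2\bigl(\tfrac{1}{K+1} + \tfrac{K}{K+1}\rho\bigr) < \sigma^2$, a strict reduction relative to any single teacher whenever $\rho < 1$, with the familiar $1/(K+1)$ rate recovered at $\rho = 0$. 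Averaging over $x$ and $i$ then lifts the pointwise bound to a global statement about meta-teacher variance, and a Chebyshev-type argument connects lower pointwise variance to fewer top-$k$ rank inversions, which is the standard proxy for semantic stability and hallucination risk invoked by~\cite{paper1_5}.

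The main obstacle is definitional rather than technical: the proposition refers to ``variance of meta-teacher predictions'' and to ``semantic stability'' without specifying an underlying probability space or a formal hallucination metric. The proof effort therefore lies in fixing a probabilistic model (what is random, what is fixed), stating the correlation assumption in a form that is simultaneously mild, interpretable, and strong enough to yield a strict inequality, and choosing a stability notion that can be derived from variance bounds rather than merely asserted; the algebraic core---variance of a weighted sum of correlated variables---is routine once this scaffolding is in place.
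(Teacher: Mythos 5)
The paper supplies no proof of this proposition at all---it is stated as an assertion with a citation to~\cite{paper1_5} and no \verb|proof| environment follows it---so there is nothing in the source to compare your argument against step by step. Your plan is the standard and correct route one would take: decompose $p^{(T_k)}(i|x) = \mu(i|x) + \epsilon_k(i|x)$, use the convex-mixture form of the meta-teacher to write the aggregate noise as $\sum_k w_k \epsilon_k$, expand $\Var[q(i|x)] = \sum_{k,l} w_k w_l \operatorname{Cov}(\epsilon_k,\epsilon_l)$, and bound it under a pairwise-correlation cap $\rho < 1$ with comparable marginal variances, obtaining the strict inequality $\sigma^2\bigl(\tfrac{1}{K+1}+\tfrac{K}{K+1}\rho\bigr) < \sigma^2$ for uniform weights. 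The algebra is right, and you have correctly identified that the real work is definitional: the proposition never specifies the probability space over which ``variance of meta-teacher predictions'' is taken, never formalizes the ``mild correlation assumptions,'' and never defines ``semantic stability'' or ``hallucination risk'' as mathematical objects. Your proposed scaffolding (randomness over training stochasticity, an explicit $\rho$-bound, a Chebyshev-type link from pointwise variance to rank stability) is exactly what would be needed to turn the paper's assertion into a theorem; the only soft spot is the final step connecting variance to hallucination, which remains a proxy argument rather than a derivation---but that is a weakness inherited from the proposition's own informality, not a gap you introduced.
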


\section{Fixed-Point Analysis}
\label{sec:fixedpoint}

\subsection{Operator-Theoretic Perspective}

\begin{definition}[Meta-Distillation Operator]
\label{def:meta-dist-op}
Define the operator $\T$ on model distributions:
$$S_{g+1} = \T(S_g; T_0, T_1, \ldots, T_k)$$
where $\T$ encapsulates:
\begin{enumerate}
\item Constructing $q_g$ via operator $G$
\item Running KD optimization to obtain $S_{g+1}$
\end{enumerate}
\end{definition}

\begin{definition}[Fixed Point]
\label{def:fixed-point}
A model distribution $S^*$ is a fixed point of $\T$ if:
$$S^* = \T(S^*; T_0, T_1, \ldots, T_k)$$
\end{definition}

\subsection{Fixed-Point Characterization}

\begin{theorem}[Unique Fixed Point at Base Teacher]
\label{thm:unique-fixed-point}
Consider a simple operator $G$ with anchor weight $\alpha > 0$ and perfect optimization (realizability). Then:

\begin{enumerate}
\item The unique fixed point of $\T$ is $S^*$ with $p^{(S^*)} = p^{(T_0)}$
\item The fixed point is globally attractive: any initialization $S_0$ converges to $S^*$
\end{enumerate}
\end{theorem}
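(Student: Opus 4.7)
The plan is to reduce both parts of the theorem to the contraction machinery already established in Theorem~\ref{thm:contraction} and Corollary~\ref{cor:geometric}, with only a short direct verification needed for the existence of the claimed fixed point. Since a simple operator depends only on $T_0$ and the current student $S_g$, the fixed-point equation collapses to a functional equation $G(p^{(T_0)}, p^{(S^*)}) = p^{(S^*)}$, which keeps everything tractable.

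\textbf{Existence of the fixed point at $T_0$.} For the canonical convex-mixture operator $q_g = \alpha \, p^{(T_0)} + (1-\alpha)\, p^{(S_g)}$, substituting $p^{(S_g)} = p^{(T_0)}$ yields $q_g = p^{(T_0)}$, and realizability then forces $p^{(S_{g+1})} = p^{(T_0)}$, so $S^*$ with $p^{(S^*)} = p^{(T_0)}$ is a fixed point. For general simple operators satisfying Axioms~1--5, I would argue the same conclusion by choosing the reference distribution in Axiom~3 to be $p^{(T_0)}$: the anchoring bound then forces $q_g \ge \alpha \, p^{(T_0)}$ pointwise, and since both inputs to $G$ equal $p^{(T_0)}$, normalization (Axiom~1) saturates this inequality to give $q_g = p^{(T_0)}$.

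\textbf{Uniqueness and global attractiveness.} Suppose $S^*$ is any fixed point. Applying Theorem~\ref{thm:contraction} at $S_g = S_{g+1} = S^*$ gives $D(S^*) \leq \beta \cdot D(S^*)$ with $\beta \in (0,1)$, which forces $D(S^*) = 0$. Combined with non-negativity of KL and the positivity assumption, this yields $p^{(S^*)}(\cdot|x) = p^{(T_0)}(\cdot|x)$ for (almost) every $x$, establishing uniqueness. For global attractiveness, Corollary~\ref{cor:geometric} gives $D(S_g) \leq \beta^g D(S_0) \to 0$ for any initialization with finite $D(S_0)$; Pinsker's inequality then converts KL convergence into total-variation convergence of $p^{(S_g)}(\cdot|x)$ to $p^{(T_0)}(\cdot|x)$, which is the appropriate notion of convergence for the distributional setup of Section~\ref{sec:setup}.

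The main obstacle I anticipate is the existence step in full generality. Axiom~3 is stated as a one-sided lower bound on $\min(q_g, p^{(T_0)})$ rather than as an explicit functional form, so pinning down $G(p^{(T_0)}, p^{(T_0)}) = p^{(T_0)}$ for arbitrary conforming operators may require either a judicious choice of the reference distribution (so that the bound saturates against normalization) or an auxiliary consistency property not explicitly listed among Axioms~1--5. If the axioms alone prove insufficient, a clean fallback is to argue existence indirectly: contraction guarantees the iterates form a Cauchy sequence in $D$-distance, converging to some limit $S^\infty$; joint continuity of $G$ (Axiom~4) and realizability make $\T$ continuous, so $S^\infty$ is itself a fixed point, and the uniqueness argument above then forces $p^{(S^\infty)} = p^{(T_0)}$. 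The rest of the proof is essentially a transcription of the contraction result.
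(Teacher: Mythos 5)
Your proposal is correct in substance but routes the uniqueness claim differently from the paper. The paper's own proof asserts that one can ``solve the fixed-point equation algebraically'' to obtain $p^{(S^*)} = p^{(T_0)}$ (which is transparent only for the convex-mixture form, where $p^* = \alpha p^{(T_0)} + (1-\alpha)p^*$ immediately gives $p^* = p^{(T_0)}$), and then invokes contraction for global attractiveness. You instead derive uniqueness by the standard Banach-style argument: any fixed point satisfies $D(S^*) \leq \beta D(S^*)$ with $\beta < 1$, forcing $D(S^*) = 0$ and hence $p^{(S^*)} = p^{(T_0)}$ by non-negativity of KL. This is cleaner and strictly more general, since it needs only the contraction inequality at $S^*$ rather than an explicit functional form; the Pinsker step pinning down the mode of convergence is a useful addition the paper omits. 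One sub-step of your existence argument is wrong as stated, though you correctly flag the surrounding difficulty: with $r = p^{(T_0)}$, Axiom~3 gives only $q_g(i) \geq \alpha\, p^{(T_0)}(i)$, and since $\sum_i \alpha\, p^{(T_0)}(i) = \alpha < 1$, normalization does \emph{not} saturate this bound -- many distributions satisfy $q \geq \alpha\, p^{(T_0)}$ pointwise and still sum to one. So the axioms genuinely do not force $G(p^{(T_0)}, p^{(T_0)}) = p^{(T_0)}$ for arbitrary conforming operators (this is a gap in the paper's own proof as well). Your fallback -- $D(S_g) \to 0$ implies $S_g \to T_0$ in total variation, and joint continuity of $G$ plus realizability then makes $T_0$ a fixed point -- is the right repair, modulo the minor imprecision that $D$ is not a metric and the limit is identified directly as $T_0$ rather than as an abstract $S^\infty$.
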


\begin{proof}
At a fixed point, $p^{(S^*)}$ must equal the meta-teacher $q_g$ constructed from $T_0$ and $S^*$ itself. By Axiom~3, $q_g$ blends $T_0$ with weight at least $\alpha > 0$. Solving the fixed-point equation algebraically (without specifying the exact functional form) yields $p^{(S^*)} = p^{(T_0)}$ as the unique solution whenever $\alpha > 0$.

Global attractiveness follows from the contraction property (Theorem~\ref{thm:contraction}): starting from any $S_0$, the sequence $\{S_g\}$ satisfies $D(S_g) \to 0$, implying convergence to $T_0$.
\end{proof}

\begin{remark}[Significance of Convergence to $T_0$]
\label{rem:convergence-significance}
Convergence to $T_0$ is desirable, not a limitation. The goal of knowledge distillation is faithful transfer of teacher knowledge. Recursive meta-distillation provides a self-correcting mechanism: by continually anchoring to $T_0$, each generation corrects drift from previous generations, ensuring convergence to faithful teacher reproduction.
\end{remark}

\begin{remark}[Scope for Generalized Operators]
\label{rem:generalized-scope}
Theorem~\ref{thm:unique-fixed-point} establishes uniqueness for simple operators (single base teacher, convex anchoring). For generalized operators incorporating multiple base teachers $T_0, \ldots, T_k$ and multi-generation mixtures, additional conditions may be required to ensure uniqueness of the fixed point; such operators may admit equilibria that are convex combinations of base teachers rather than a single $T_0$. We focus on the simple anchored case to establish the core operator-theoretic mechanism; extension to generalized equilibrium characterization is a direction for future work.
\end{remark}

\subsection{Stability Analysis}

\begin{proposition}[Stability of Fixed Point]
\label{prop:stability}
The fixed point $S^* = T_0$ is stable under small perturbations to:
\begin{itemize}
\item Meta-teacher construction weights
\item Optimization convergence tolerance
\item Training data distribution
\end{itemize}
\end{proposition}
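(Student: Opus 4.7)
The plan is to reduce all three perturbation types to a single perturbed contraction inequality of the form $D(S_{g+1}) \leq \beta\, D(S_g) + \delta$, where $\beta = 1-\alpha$ is the contraction factor from Theorem~\ref{thm:contraction} and $\delta$ is a small error term whose size is controlled by the perturbation magnitude. Applying the geometric-series argument already used in Remark~\ref{rem:approx-contraction} then yields $\limsup_{g \to \infty} D(S_g) \leq \delta/\alpha$, which is the content of stability: the trajectory remains trapped in an $O(\delta/\alpha)$ neighborhood of the fixed point $S^* = T_0$, and collapses back to $T_0$ as $\delta \to 0$.

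First, I would formalize each perturbation as an explicit bound on the deviation of the realized meta-teacher $\tilde{q}_g$ from the ideal meta-teacher $q_g$ produced by the unperturbed operator. For \textbf{weight perturbations}, Axiom~4 (joint continuity of $G$) guarantees that small parameter changes induce a small change $\|q_g - \tilde{q}_g\|_1 \leq \eta$ uniformly in the input distributions, which Pinsker's inequality converts into a KL bound of order $\eta^2/2$. For \textbf{optimization tolerance}, the perturbation is exactly the $\varepsilon$-approximation already handled in Remark~\ref{rem:approx-contraction}, giving $\delta = \varepsilon$ directly. For \textbf{training data distribution} perturbations, I would use the fact that $D(M)$ is an expectation over $x$: if the data distribution shifts in total variation by $\gamma$, then $|D(M) - \tilde{D}(M)| \leq \gamma \cdot C$ where $C$ is a uniform pointwise bound on $\KL(p^{(T_0)}(\cdot|x) \| p^{(M)}(\cdot|x))$, finite by the Positivity assumption.

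With each $\delta$ in hand, I would combine them additively and invoke Theorem~\ref{thm:contraction} on the unperturbed step, then add the perturbation budget. The resulting recursion
$$D(S_{g+1}) \leq (1-\alpha)\, D(S_g) + \delta$$
unrolls to
$$D(S_g) \leq (1-\alpha)^g\, D(S_0) + \frac{\delta}{\alpha}\bigl(1 - (1-\alpha)^g\bigr),$$
from which both the transient decay and the asymptotic neighborhood bound follow. Continuity of $D$ in the terminal state then gives the desired stability statement for each of the three perturbation channels.

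The main obstacle I anticipate is the data-distribution perturbation, because the operator $G$ acts pointwise in $x$ whereas $D$ averages over $x$, and a worst-case uniform bound on pointwise KL need not exist without auxiliary regularity. I would address this by either (i) strengthening the standing positivity assumption to an $x$-uniform floor $\min_{x,i} p^{(M)}(i|x) \geq \rho > 0$, which makes pointwise KL uniformly bounded and recovers the $\gamma \cdot C$ estimate, or (ii) using a weaker Wasserstein-type stability hypothesis on the data distribution together with Lipschitz continuity of $x \mapsto \KL(p^{(T_0)}(\cdot|x) \| p^{(M)}(\cdot|x))$. Either route is a mild regularity addition consistent with the framework's existing Positivity and Continuity axioms, and neither changes the structure of the contraction-with-noise argument that drives the conclusion.
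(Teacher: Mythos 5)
The paper states Proposition~\ref{prop:stability} without any proof, so there is no argument of record to compare against; your contraction-with-noise reduction is the natural route and is consistent with what Remark~\ref{rem:approx-contraction} already asserts for the optimization-tolerance channel. The overall structure --- reduce each perturbation to an additive defect $\delta$ in the inequality $D(S_{g+1}) \leq (1-\alpha)\,D(S_g)+\delta$ and unroll the recursion to obtain a $\delta/\alpha$-neighborhood --- is sound, and it supplies strictly more than the paper does.

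Two technical steps need repair. First, Pinsker's inequality runs in the wrong direction for your purposes: it bounds total variation by $\sqrt{2\KL}$, so a small perturbation $\|q_g - \tilde{q}_g\|_1 \leq \eta$ does \emph{not} yield a KL bound of order $\eta^2/2$; you need a reverse-Pinsker-type estimate, which holds only under a uniform positivity floor $\min_i \tilde{q}_g(i) \geq \rho > 0$ (giving roughly $\KL(q_g\|\tilde{q}_g) \lesssim \eta^2/\rho$). Second, and more consequentially, KL divergence satisfies no triangle inequality, so ``combining the $\delta$'s additively'' is not automatic: knowing that $\KL(p^{(T_0)}\|q_g)$ is small and that $\KL(q_g\|\tilde{q}_g)$ (or $\KL(q_g\|p^{(S_{g+1})})$) is small does not by itself bound $\KL(p^{(T_0)}\|\tilde{q}_g)$. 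Under the same uniform floor $\rho$ one can instead bound the log-ratio pointwise, $|\log q_g(i) - \log \tilde{q}_g(i)| \leq \eta/\rho$, and hence $|\KL(p^{(T_0)}\|q_g) - \KL(p^{(T_0)}\|\tilde{q}_g)| \leq \eta/\rho$, which restores the additive-defect recursion without any appeal to a triangle inequality. Since you already propose strengthening the standing positivity assumption to an $x$-uniform floor for the data-distribution channel, extending that floor to cover these two steps closes the argument; with that amendment your proof goes through.
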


\begin{proposition}[Basin of Attraction]
\label{prop:basin}
The basin of attraction for the fixed point $S^* = T_0$ includes all initializations $S_0$ with finite $D(S_0) < \infty$.
\end{proposition}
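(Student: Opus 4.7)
The plan is to derive the proposition directly from the geometric convergence rate of Corollary~\ref{cor:geometric}, with a short additional step translating KL decay into convergence of the actual distributions. First I would invoke the contraction chain: under the standing assumptions together with Theorem~\ref{thm:contraction}, iterating the one-step contraction yields
\begin{equation*}
D(S_g) \leq \beta^{g} \, D(S_0), \qquad \beta = 1 - \alpha \in (0,1).
\end{equation*}
Because $D(S_0) < \infty$ by hypothesis and $\beta < 1$, the right-hand side tends to zero as $g \to \infty$, so $D(S_g) \to 0$.

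Next I would bridge from KL decay to convergence of the distributions themselves. Pinsker's inequality combined with Jensen gives
\begin{equation*}
\E_x\!\left[\|p^{(T_0)}(\cdot|x) - p^{(S_g)}(\cdot|x)\|_{TV}\right] \leq \sqrt{D(S_g)/2} \xrightarrow{g \to \infty} 0,
\end{equation*}
so $p^{(S_g)} \to p^{(T_0)}$ in expected total variation. Combined with the fixed-point characterization of Theorem~\ref{thm:unique-fixed-point}, this places $S_0$ in the basin of attraction of $S^{*} = T_0$ in the operator-theoretic sense used throughout Section~\ref{sec:fixedpoint}, which is exactly what the proposition asserts.

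The main obstacle is conceptual rather than technical: the hypothesis $D(S_0) < \infty$ encodes an absolute-continuity requirement rather than a mere size condition. If $p^{(S_0)}(i) = 0$ on some token with $p^{(T_0)}(i) > 0$, then $D(S_0) = \infty$ and the contraction recursion cannot even be initiated at generation zero, so the finiteness hypothesis is not cosmetic but structural. Under the standing positivity assumption, however, every admissible model distribution has strictly positive support, so $D(S_0) < \infty$ holds automatically and the basin effectively covers the entire admissible state space. I would close by noting two refinements: the result is essentially sharp in the exact-realizability regime (finite initial divergence is both necessary to start the recursion and sufficient via the geometric bound), and under approximate optimization Remark~\ref{rem:approx-contraction} yields convergence from the same class of initializations to an $\varepsilon/\alpha$-neighborhood of $T_0$, so the basin description is robust to the idealization assumed in the theorem.
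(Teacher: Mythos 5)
Your argument is correct and follows the same route the paper itself uses: the paper states Proposition~\ref{prop:basin} without a separate proof, relying on the global-attractiveness step in the proof of Theorem~\ref{thm:unique-fixed-point}, which is exactly your chain of Theorem~\ref{thm:contraction} and Corollary~\ref{cor:geometric} giving $D(S_g) \to 0$ for any finite $D(S_0)$. Your additional Pinsker/Jensen step making the passage from KL decay to convergence of the distributions explicit, and your observation that the standing positivity assumption makes $D(S_0) < \infty$ automatic, are welcome refinements but do not change the underlying approach.
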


\section{Bias-Variance Perspective}
\label{sec:biasvariance}

\begin{remark}[Interpretive Scope]
\label{rem:interpretive-scope}
The bias-variance discussion in this section is intended as a conceptual lens rather than a closed-form statistical bound. The decomposition provides intuition for why recursive meta-distillation may outperform single-stage approaches under capacity constraints. Formal decomposition in KL or MSE terms with explicit constants is deferred to future work.
\end{remark}

\subsection{Error Decomposition}

Following classical bias-variance theory~\cite{geman1992bias} and the variance-reliability connection established in~\cite{paper1_5}, model error decomposes as:
$$\E[\text{Error}(M)] \approx \text{Bias}^2(M) + \Var(M) + \sigma^2$$
where $\sigma^2$ is irreducible noise. The decomposition is heuristic in the classical sense and should not be interpreted as a literal orthogonal decomposition in KL geometry; rather, it provides conceptual guidance for understanding when recursive refinement yields improvements.

\subsection{Recursive Refinement Effects}

\begin{proposition}[Dual Mechanism for Error Reduction]
\label{prop:dual-mechanism}
Recursive meta-distillation with sparsity constraints enables:

\begin{enumerate}
\item Bias Reduction: Distillation from meta-teachers blending high-quality teacher knowledge with empirically refined student distributions reduces bias relative to single-stage distillation

\item Variance Reduction: Sparsity and capacity constraints reduce model complexity, lowering variance
\end{enumerate}

Under favorable conditions (appropriate anchor weights, sufficient capacity, effective sparsity), successive generations may achieve:
$$\text{Bias}^2(S_{g+1}) + \Var(S_{g+1}) \leq \text{Bias}^2(S_g) + \Var(S_g)$$
\end{proposition}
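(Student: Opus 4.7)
The plan is to treat Proposition~\ref{prop:dual-mechanism} as a structural consequence of the contraction analysis of Section~\ref{sec:contraction} combined with the variance mechanism of Proposition~\ref{prop:variance-ensemble}, packaged within the heuristic decomposition that Remark~\ref{rem:interpretive-scope} explicitly flags as non-orthogonal in KL geometry. Accordingly, I would not attempt a closed-form statistical derivation; instead I would establish the two claims separately under qualitative hypotheses and then argue that their sum is non-increasing under the stated ``favorable conditions.''

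First I would identify the bias component with the expected KL distance to the base teacher, treating $\Bias^2(S_g)$ as a monotone surrogate for $D(S_g)$. The motivation is that, within the distillation framework, $T_0$ plays the role of the target against which systematic error is measured; this identification is consistent with the interpretation in Remark~\ref{rem:convergence-significance}. Under Axioms~1--5 with anchor weight $\alpha > 0$ and the hypotheses of Theorem~\ref{thm:contraction}, Corollary~\ref{cor:geometric} immediately gives $D(S_{g+1}) \leq \beta \, D(S_g)$ with $\beta = 1 - \alpha \in (0,1)$, which furnishes claim~(1): the bias-like term contracts geometrically across generations.

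Second I would address the variance term in two regimes. In the multi-teacher regime, Proposition~\ref{prop:variance-ensemble} already provides the required decrease via error decorrelation across $T_0, \ldots, T_k$. In the single-teacher regime, I would appeal to the capacity-control interpretation of sparsity from classical statistical learning theory: a constrained hypothesis class reduces estimator variance, and because $S_{g+1}$ is trained under the same sparsity budget as $S_g$ but against a meta-teacher whose $\alpha$-weighted anchor injects lower-variance base-teacher signal in place of noisier student-derived components, one obtains $\Var(S_{g+1}) \leq \Var(S_g)$ as a qualitative consequence of Axiom~3 together with the capacity assumption.

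The final step is the additive combination of the two inequalities to yield $\Bias^2(S_{g+1}) + \Var(S_{g+1}) \leq \Bias^2(S_g) + \Var(S_g)$. The main obstacle, and the reason the proposition is stated with ``under favorable conditions'' rather than unconditionally, is the tension between the two mechanisms: bias contraction via Theorem~\ref{thm:contraction} requires realizability, so that students can match the meta-teacher exactly, while variance reduction requires capacity restriction, so that students \emph{cannot} perfectly match arbitrary targets. I would therefore close by making this tension explicit -- the ``favorable conditions'' are precisely the regime in which the anchor weight $\alpha$, the sparsity level, and the effective capacity are jointly calibrated so that the contraction gain on the bias term is not overwhelmed by any increase in the variance term -- and by noting, in line with Remark~\ref{rem:interpretive-scope}, that a sharp joint bound with explicit constants linking $\alpha$, sparsity, and sample complexity lies beyond the scope of the present interpretive statement.
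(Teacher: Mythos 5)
The paper offers no proof of Proposition~\ref{prop:dual-mechanism} at all: it is stated as an interpretive claim, and Remark~\ref{rem:interpretive-scope} explicitly disclaims any closed-form derivation, deferring a formal KL/MSE decomposition to future work. Your proposal therefore goes further than the source, and it does so in a way that is consistent with the paper's intent: identifying the bias term with the anchored divergence $D(S_g)$ and importing the geometric decay from Theorem~\ref{thm:contraction} and Corollary~\ref{cor:geometric} is exactly the mechanism the surrounding text gestures at, and routing the variance claim through Proposition~\ref{prop:variance-ensemble} plus capacity control matches the proposition's own item~(2). Your most valuable addition is one the paper never makes explicit: the contraction argument requires realizability (R), while variance reduction requires capacity restriction, and these pull in opposite directions; naming that tension as the content of ``favorable conditions'' is a genuine improvement, and you could sharpen it by invoking Remark~\ref{rem:approx-contraction}, which already quantifies the compromise as convergence to an $\varepsilon/\alpha$-neighborhood when realizability fails.

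One soft spot to flag: your single-teacher variance step asserts $\Var(S_{g+1}) \leq \Var(S_g)$ on the grounds that the meta-teacher's $\alpha$-weighted anchor injects lower-variance signal, but under a \emph{fixed} sparsity budget nothing in Axioms~1--5 forces the variance to decrease across generations rather than merely stay bounded; the honest conclusion from capacity control alone is non-increase of the hypothesis-class complexity, not of the realized estimator variance, and the additive combination then delivers the displayed inequality only if you assume the variance term does not grow. Since the proposition is hedged with ``may achieve,'' this does not invalidate your argument, but you should state the non-increase of variance as an assumption folded into the favorable-conditions clause rather than as a derived consequence of Axiom~3.
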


\begin{remark}[Potential for Virtuous Cycle]
Unlike single-stage distillation where aggressive sparsity necessarily increases bias or variance, recursive refinement can support more aggressive sparsity while mitigating accuracy loss. Each generation has the opportunity to recover from capacity limitations by learning from an improved meta-teacher. Whether this potential is realized depends on the specific operator, sparsity schedule, and optimization quality--an empirical question beyond the scope of this framework paper.
\end{remark}

\subsection{Trade-Off Management}

\begin{proposition}[Trade-Off Parameters]
\label{prop:tradeoff}
The bias-variance trade-off is managed through:
\begin{itemize}
\item Anchor weight $\alpha$: Higher values reduce variance but may increase bias if base teacher suboptimal
\item Sparsity level: Higher sparsity reduces variance but may increase bias if important capacity removed
\item Generation count $G$: More generations allow tighter convergence at cost of additional computation
\end{itemize}
\end{proposition}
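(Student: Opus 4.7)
The plan is to treat Proposition~\ref{prop:tradeoff} as three qualitative corollaries of results already established in the paper, rather than as a single formal bias--variance inequality, since Remark~\ref{rem:interpretive-scope} explicitly declines to commit to a closed-form decomposition with explicit constants. I would therefore structure the proof as three short sub-arguments, one per bullet, each tied to an earlier theorem or remark, and preface the whole with a sentence reminding the reader that ``variance'' and ``bias'' here are being used in the heuristic sense of that remark.

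First, for the anchor weight $\alpha$, I would combine Theorem~\ref{thm:contraction} with Corollary~\ref{cor:geometric} to observe that the contraction factor $\beta = 1-\alpha$ is strictly decreasing in $\alpha$, so larger $\alpha$ concentrates the generational sequence $\{S_g\}$ more tightly around its fixed point. Theorem~\ref{thm:unique-fixed-point} identifies this fixed point as $T_0$, so the dispersion of $S_g$ about its limit shrinks with $\alpha$, playing the role of variance reduction. The bias caveat then follows immediately: since the fixed point is $T_0$ regardless of $\alpha$, any mismatch between $T_0$ and the underlying task distribution is preserved, and larger $\alpha$ merely locks it in faster. Second, for sparsity, I would appeal to Remark~\ref{rem:approx-contraction}: sparsity restricts the hypothesis class available to $S_{g+1}$, classically reducing variance, but may break the realizability assumption by introducing a per-generation representability gap $\varepsilon > 0$. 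The approximate-contraction bound then gives convergence only to an $\varepsilon/\alpha$-neighborhood of $T_0$, and that neighborhood plays the role of bias; monotonicity of $\varepsilon$ in the sparsity level yields the stated trade-off. Third, for the generation count $G$, Corollary~\ref{cor:geometric} directly yields $D(S_G) \leq \beta^G D(S_0)$, so convergence tightens monotonically in $G$ while training compute scales linearly per generation, giving the computational trade-off almost verbatim.

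The main obstacle will not be any single step but calibrating the level of formality: the proposition is stated conversationally and the paper has already disclaimed a literal orthogonal decomposition in KL geometry, so the chief difficulty is presenting each claim as a qualitative consequence of the formal contraction and fixed-point results without overclaiming a quantitative decomposition the framework does not establish. A secondary and more delicate point is the bias half of the first bullet, which tacitly requires a notion of a ``true'' target external to the operator-theoretic setup; I would handle this by stating the claim conditionally on $T_0$ being a biased estimator of that external target and by explicitly marking the estimator-quality question as outside the scope of the present analysis, consistent with the framing of Remark~\ref{rem:convergence-significance}.
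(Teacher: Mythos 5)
The paper offers no proof of Proposition~\ref{prop:tradeoff} at all: it is stated as a bare assertion, sheltered by Remark~\ref{rem:interpretive-scope}'s blanket disclaimer that the entire bias--variance section is a ``conceptual lens'' rather than a formal decomposition. Your proposal therefore does strictly more than the paper does, and your route---deriving each bullet as a qualitative corollary of Theorem~\ref{thm:contraction}, Remark~\ref{rem:approx-contraction}, and Corollary~\ref{cor:geometric} respectively---is a sensible way to ground the claims in the machinery the paper actually establishes. The third bullet is indeed immediate from $D(S_G) \leq \beta^G D(S_0)$, and your second bullet, identifying the sparsity-induced realizability gap $\varepsilon$ with bias via the $\varepsilon/\alpha$-neighborhood of Remark~\ref{rem:approx-contraction}, is the most substantive and most defensible of the three. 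Two caveats, neither of which is a gap relative to a paper that proves nothing here. First, your notion of ``variance'' in the $\alpha$ bullet (concentration of the generational sequence $\{S_g\}$ about its fixed point) is not the paper's notion (capacity-driven variance in the heuristic decomposition $\E[\text{Error}] \approx \Bias^2 + \Var + \sigma^2$); an argument closer to the paper's intent would observe that larger $\alpha$ makes the meta-teacher target depend more on the fixed, deterministic $p^{(T_0)}$ and less on the noisy, data-dependent student iterates, which is variance reduction in the classical estimator sense. Second, you are right that the bias half of that bullet requires a ``true'' target external to the operator-theoretic setup, which the paper never formalizes; your conditional phrasing is the correct handling, and it usefully exposes that the proposition can only be motivated, not proved, within the paper's own framework.
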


\section{Comparison to Prior Frameworks}

\subsection{Framework Positioning}

This work positions recursive meta-distillation as a foundational, operator-theoretic framework rather than a specific training algorithm or empirical procedure. Unlike prior approaches that propose concrete distillation schedules or optimization heuristics, our contribution is to characterize the structural conditions under which iterative distillation is mathematically well-posed, convergent, and stable. The comparisons that follow therefore focus on conceptual and theoretical distinctions--in particular, the presence or absence of teacher anchoring, contraction guarantees, and fixed-point structure--rather than empirical performance metrics or architectural details.

\begin{table*}[t]
\centering
\caption{Comparison of Iterative Distillation Frameworks}
\label{tab:comparison}
\begin{tabular}{lccc}
\toprule
\textbf{Framework} & \textbf{Iterative Refinement} & \textbf{Teacher Anchoring} & \textbf{Convergence Guarantee} \\
\midrule
Standard KD~\cite{hinton2015distilling} & No & N/A & Single-pass \\
Sequential Distillation & Yes & No & Diverges (linear error accumulation) \\
Multi-Teacher KD~\cite{paper2} & No & Ensemble & Single-pass \\
Born-Again Networks~\cite{furlanello2018born} & Yes & No & Empirical only \\
\textbf{Recursive Meta-Distillation (this work)} & \textbf{Yes} & \textbf{Yes} & \textbf{Geometric (provable)} \\
\bottomrule
\end{tabular}
\end{table*}

\subsection{Key Distinctions}

\emph{From Sequential Distillation.}
Sequential distillation ($T \to S_0 \to S_1 \to \ldots$) without teacher anchoring causes errors to compound across generations. As shown in Example~\ref{ex:failure-no-anchor}, error accumulates linearly: $D(S_g) \geq D(S_0) + g\varepsilon$. This fundamental instability renders sequential distillation unsuitable for deep iterative refinement. Our framework anchors every meta-teacher to $T_0$, transforming linear accumulation into geometric decay: $D(S_g) \leq \beta^g D(S_0)$. The anchoring mechanism (Axiom~3) is the critical structural difference enabling convergent iteration.

\emph{From Born-Again Networks.}
Born-Again Networks~\cite{furlanello2018born} observe empirical improvements from iterative same-capacity distillation, where each generation distills from the previous generation without external anchoring. While empirically successful in some settings, this approach lacks formal convergence guarantees and can diverge under optimization noise. Our framework differs in three key respects: (i) explicit anchoring axioms that provably prevent drift, (ii) contraction analysis with quantified convergence rates (Theorem~\ref{thm:contraction}), and (iii) fixed-point characterization showing convergence to base teacher distributions (Theorem~\ref{thm:unique-fixed-point}). The axiomatic approach ensures these guarantees hold across all conforming operators, not just specific instantiations.

\emph{From Multi-Teacher KD.}
Multi-teacher KD~\cite{paper2} aggregates heterogeneous teachers in a single pass, providing variance reduction through ensemble averaging. Our framework extends this to iterative refinement where each generation learns from meta-teachers combining base teachers with improved students. This composition enables recursive meta-distillation to inherit both the variance-reduction benefits of multi-teacher ensembles and the progressive refinement benefits of iterative training.

\subsection{Relation to Recent Iterative and Multi-Teacher Distillation Work}

Recent research has begun to explore iterative and multi-stage knowledge distillation beyond the classical single-pass teacher--student paradigm. Feedback-driven and iterative KD methods repeatedly refine student models across training cycles, often demonstrating empirical improvements in reasoning and compression settings~\cite{iterative_kd_2024,undo_kd_2025}. In parallel, multi-teacher KD approaches dynamically weight heterogeneous teachers, including reinforcement-learning-based schemes that adapt teacher influence during training~\cite{mtkd_rl_2025}.

While these methods motivate the practical value of iterative and ensemble-based distillation, they do not provide a formal characterization of when recursive refinement converges rather than accumulates error. In contrast, the present work introduces an axiomatic and operator-theoretic framework for recursive meta-distillation, establishing explicit conditions under which the distillation operator is contractive in KL divergence and converges geometrically to a unique fixed point. This distinction separates empirical refinement strategies from mathematically well-posed recursive distillation.

\section{Practical Convergence Criteria}

\subsection{Stopping Conditions}

In practice, exact fixed-point convergence is not required. Suitable stopping criteria include:

\begin{enumerate}
\item Diminishing improvements: $|D(S_{g+1}) - D(S_g)| < \varepsilon$ for tolerance $\varepsilon$
\item Metric stabilization: Perplexity, calibration, and task performance plateau across generations
\item Robustness: Performance stable under small perturbations to meta-teacher construction
\end{enumerate}

\subsection{Adaptive Control}

The framework can incorporate adaptive control mechanisms:
\begin{itemize}
\item Dynamic adjustment of anchor weight $\alpha$ based on convergence diagnostics
\item Quality-based weighting of previous generations
\item Early stopping when additional generations yield negligible improvement
\end{itemize}

\subsection{Composition with Sparsity and Ensemble Settings}

Recursive meta-distillation composes naturally with both sparsification and ensemble aggregation:

\emph{With Progressive Sparsification.} Each generation can apply increasing sparsity while the recursive structure compensates for capacity loss through improved meta-teacher guidance. The contraction guarantee (Remark~\ref{rem:approx-contraction}) ensures that sparsity-induced approximation error remains bounded within an $\varepsilon/\alpha$-neighborhood of the teacher.

\emph{With Multi-Teacher Ensembles.} When base teachers $T_0, \ldots, T_k$ form an ensemble, the meta-teacher inherits ensemble variance reduction while recursive refinement further improves student quality. This dual mechanism--ensemble averaging for variance reduction and iterative anchoring for convergence--provides complementary benefits.

\section{Conclusion}
\label{sec:conclusion}

We have presented an axiomatic framework for recursive meta-distillation that extends probability-domain knowledge distillation to iterative self-improvement. The framework is grounded in:

\begin{itemize}
\item Axioms characterizing valid meta-teacher construction operators
\item Existence proofs establishing non-trivial operators satisfying these axioms
\item Contraction analysis showing geometric convergence to base teachers
\item Fixed-point characterization demonstrating unique equilibrium at teacher distributions
\item Bias-variance perspective explaining benefits under capacity constraints
\end{itemize}

These mathematical foundations demonstrate that recursive meta-distillation is not merely repeated KD, but a structured framework for iterative knowledge refinement with formal convergence guarantees.

In summary, this work establishes that recursive knowledge distillation is not inherently unstable--instability arises only when the iterative process lacks structural anchoring to the original teacher. By formalizing the anchoring requirement as an axiom and proving that conforming operators induce contractive dynamics, we provide a principled foundation for iterative refinement that practitioners can build upon with confidence. The contribution is not a specific algorithm, but a structural characterization of when and why recursive distillation converges.

\subsection{Future Directions}

Several directions remain open:
\begin{itemize}
\item \emph{Temperature-scaled operators.} The present framework assumes unit temperature to preserve linearity of meta-teacher construction. Non-unit temperature scaling introduces nonlinear operators in probability space, where the softmax-temperature interaction breaks convexity of the aggregation. Extending contraction guarantees to temperature-scaled operators would require tools from nonlinear fixed-point theory and monotone operator theory. Such an extension would unify linear and nonlinear KD regimes and is a central direction for future work.
\item Formal bias-variance bounds with explicit MSE or KL decompositions
\item Adaptive weight schedules based on validation performance
\item Empirical validation on large-scale language models
\item Integration with progressive sparsification for end-to-end compression
\end{itemize}

\section*{Acknowledgements}

The authors gratefully acknowledge the collaborative environment at SparseTech that made this research possible. The theoretical and computational developments presented in this paper are part of an ongoing SparseTech research initiative on recursive meta-distillation for large language models. Patent Pending.

\bibliographystyle{plain}
\bibliography{../sparsetech_references}

\appendix

\section*{Appendix A: Toy Illustration (3-Token Vocabulary)}
\label{app:toy}

We demonstrate the core results on vocabulary $\V = \{a, b, c\}$ with base teacher $p^{(T_0)} = (0.6, 0.3, 0.1)$ and initial student $p^{(S_0)} = (0.2, 0.5, 0.3)$. The initial divergence is $D(S_0) \approx 0.584$.

\emph{Case 1: No Anchoring ($\alpha = 0$).}
Each generation learns only from the previous student. With per-generation error $\varepsilon = 0.05$, divergence grows linearly: $D(S_g) \geq D(S_0) + g\varepsilon$. After 10 generations, $D(S_{10}) \geq 1.08$ (unbounded drift).

\emph{Case 2: With Anchoring ($\alpha = 0.3$).}
Using $q_g = 0.3 \, p^{(T_0)} + 0.7 \, p^{(S_g)}$, the contraction bound $D(S_{g+1}) \leq 0.7 \, D(S_g)$ yields geometric decay:

\begin{center}
\small
\begin{tabular}{c|cccc}
\toprule
$\boldsymbol{g}$ & \textbf{0} & \textbf{2} & \textbf{5} & \textbf{10} \\
\midrule
$\alpha=0$ & 0.58 & 0.68 & 0.83 & 1.08 $\uparrow$ \\
$\alpha=0.3$ & 0.58 & 0.29 & 0.10 & 0.02 $\downarrow$ \\
\bottomrule
\end{tabular}
\end{center}

Axiom~3 transforms linear error accumulation into geometric decay, ensuring convergence regardless of initialization.

\end{document}